\documentclass[letterpaper, 10 pt, conference]{ieeeconf}  

\IEEEoverridecommandlockouts                              
\usepackage{amsmath,amssymb,bbm,color,psfrag,amsfonts,mathtools,amsbsy}
\usepackage{dsfont}
\usepackage{graphicx,cuted}                       

\usepackage{wrapfig}
\usepackage{pdfpages}
\usepackage{accents}
\usepackage{lipsum}
\usepackage[T1]{fontenc}
\usepackage[utf8]{inputenc}
\usepackage{mathtools}
\usepackage{multicol}
\usepackage{paralist}   
\usepackage{figlatex}
\let\labelindent\relax
\usepackage{enumitem}
\usepackage{booktabs}  
\usepackage{multirow}
\usepackage[normalem]{ulem}

\usepackage{caption}
\usepackage{subcaption}

\usepackage{cite}
\usepackage[pdftex,pdfpagelabels,bookmarks,hyperindex,hyperfigures]{hyperref}

\newcounter{problem}

\makeatletter
\newcommand\footnoteref[1]{\protected@xdef\@thefnmark{\ref{#1}}\@footnotemark}
\makeatother

\newtheorem{theorem}{Theorem}

\newtheorem{lemma}{Lemma}

\newtheorem{remark}{Remark}
\newtheorem{example}{Example}

\usepackage{color}

\newcommand{\kibitz}[2]{\ifnum\Comments=0\textcolor{#1}{#2}\fi}

\newcommand{\RR}{\mathbb{R}}
\newcommand{\NN}{\mathbb{N}}
\newcommand{\mc}{\mathcal}

\newcommand{\im}{\mathbf{i}}
\newcommand{\CC}{\mathbb{C}}
\newcommand{\ZZ}{\mathbb{Z}}

\title{\LARGE \bf
FNO$^{\angle \theta}$: Extended Fourier neural operator for learning state and optimal control of distributed parameter systems
}

\author{Zhexian Li, Ketan Savla
\thanks{Z. Li is with the Sonny Astani Department of Civil and Environmental Engineering, University of Southern California, Los Angeles, CA 90089, USA. {\tt\small zhexianl@usc.edu}}%
\thanks{K. Savla is with the Sonny Astani Department of Civil and Environmental Engineering, the Daniel Epstein Department of Industrial and Systems Engineering, and the Ming Hsieh Department of Electrical and Computer Engineering, University of Southern California, Los Angeles, CA 90089, USA. {\tt\small ksavla@usc.edu}}
\thanks{
 K. Savla has a financial interest in Xtelligent, Inc.}%
 \thanks{The code is available at \href{https://github.com/zhexianli-usc/Optimal-control/tree/master}{[Code]}}
}

\begin{document}

\maketitle
\thispagestyle{empty}
\pagestyle{empty}

\begin{abstract}
We propose an extended Fourier neural operator (FNO) architecture for learning state and linear quadratic additive optimal control of systems governed by partial differential equations.
Using the Ehrenpreis-Palamodov fundamental principle, we show that any state and optimal control of linear PDEs with constant coefficients can be represented as an integral in the complex domain.
The integrand of this representation involves the same exponential term as in the inverse Fourier transform, where the latter is used to represent the convolution operator in FNO layer.
Motivated by this observation, we modify the FNO layer by extending the frequency variable in the inverse Fourier transform from the real to complex domain to capture the integral representation from the fundamental principle.
We illustrate the performance of FNO in learning state and optimal control for the nonlinear Burgers' equation, showing order of magnitude improvements in training errors and more accurate predictions of non-periodic boundary values over FNO.

\end{abstract}

\section{INTRODUCTION}

Distributed parameter systems, such as systems governed by partial differential equations, are of great importance in various scientific and engineering fields, including fluid dynamics, climate modeling, and control of physical processes.
The domain of such systems typically involves both temporal and spatial dimensions, making prediction and control tasks significantly harder than finite-dimensional systems.
On the one hand, the state space of distributed parameter systems is infinite-dimensional, for which optimal control methods have been developed \cite{lions1970optimal,hinze2008optimization}, but often requiring significant computational resources even for small-scale systems.
On the other hand,
discretizing the spatial domain, also known as spatially lumping the system, leads to finite-dimensional approximation, for which a rich set of tools and methods have been developed for optimal control \cite{bryson2018applied}. 
However, the tradeoff between accuracy and discretization size needs to be carefully investigated to balance the computational efficiency and approximation error for the lumped system.

Recently, there has been a surge of interest in using neural operators to learn solution operators for PDEs, due to the potential in learning fast and accurate surrogate infinite-dimensional models.
Neural operators are mappings between function spaces represented in the form of neural networks \cite{kovachki2023neural}, e.g., from coefficient functions or boundary conditions to PDE solutions.
Every layer in the neural network consists of an integral operator that varies between different types of neural operators.
Among all different neural operators, the Fourier neural operator (FNO) \cite{li2020fourier} uses a convolution operator represented in the Fourier space. 
This simple architecture has been found to provide the smallest approximation error for solutions of various PDEs \cite{takamoto2022pdebench} and to successfully reproduce complicated physical phenomena such as turbulent flows \cite{li2020fourier}.
In addition to solving PDEs, neural operators have also been used to learn optimal control operators \cite{song2026learning} and to assist in the control computation of PDEs such as boundary control with backstepping methods \cite{bhan2023neural}.

The key motivation for using the convolution operator in FNO is the fact that every linear PDE in $\mathbb{R}^n$ with constant coefficients has an elementary solution in convolution form, following the fundamental solution independently established by Ehrenpreis \cite{ehrenpreis1954solution} and Malgrange \cite{malgrange1956existence}.
However, solutions in bounded domains with non-periodic boundary conditions are not in convolution form.
The limitation of the convolution operator becomes apparent when implemented as a Fourier series, which is inherently periodic.
Nevertheless, this limitation can be mitigated in practice by using additional parameters in neural operators \cite{li2020fourier} or Fourier continuation methods \cite{ganeshram2022fc}.

Although the fundamental solution is for systems without boundaries, additional structures for solution representations have been established for general convex (potentially bounded) domains.
Specifically, the Ehrenpreis-Palamodov fundamental principle \cite{ehrenpreis1970fourier,palamodov1970linear} states that any solution to linear PDEs in a convex subset of $\RR^n$ with constant coefficients can be represented as an integral of exponential functions multiplied by polynomials over structured subsets of $\CC^n$.
The proof of this result is nonconstructive and explicit expressions for such an integral representation have been obtained only for one- or two-dimensional space using the Fokas method \cite{fokas1997unified,fokas2008unified,deconinck2014method}.
However, the structural insights provided by the integral representation can be leveraged in the design of neural operators.

Our goal is to design a neural operator that uses the integral representation of the Ehrenpreis-Palamodov fundamental principle in every neural layer to learn state and control operators for systems governed by PDEs.
Despite its abstract nature, we observe that the exponential term in the integral representation from the fundamental principle is in the same form of the inverse Fourier transform, with the difference that the former integral is over the complex domain.
Since the inverse Fourier transform is used in the convolution operator in FNO, it is natural for the newly designed neural operator to inherit the structure of FNO and extend the frequency variable to the complex domain.

The contribution of this paper is the following. First, we show that, in addition to solutions, linear quadratic optimal control for linear PDEs with constant coefficients can also be represented in a similar integral form as the fundamental principle.
Then, we propose an extended FNO architecture, called FNO$^{\angle \theta}$, by introducing a new complex variable in the inverse Fourier transform, whose modulus is the original frequency variable in FNO and phase is a new parameter $\theta$ to learn.
When $\theta=0$, FNO$^{\angle \theta}$ reduces to FNO and inherits the universal approximation property \cite{kovachki2021universal}.
Beyond theoretical analysis for linear PDEs, we present numerical experiments to demonstrate the performance of FNO$^{\angle \theta}$ in learning state and optimal control for the nonlinear Burgers' equation, exhibiting order of magnitude improvements on training errors and more accurate predictions of non-periodic boundary values over FNO.

We conclude this section by introducing key notations used in the paper. We use $\RR, \RR_+, \CC, \ZZ, \NN$ to denote the sets of real numbers, positive real numbers, complex numbers, integers, and natural numbers, respectively.
We use $\text{C}^\infty(\Omega)$ and $\text{L}^2(\Omega)$ to denote the set of infinitely differentiable functions and square-integrable functions defined on $\Omega$, respectively.
The convolution operator is denoted by $\ast$.
The composition of two operators $\mc A$ and $\mc B$ is denoted by $\mc A \circ \mc B$.

\section{PROBLEM FORMULATION}

In this paper, we consider linear quadratic optimal control for systems governed by linear PDEs with constant coefficients.
Let $P$ denote a matrix with entries $P_{j,l}, 1\leq j\leq p, 1\leq l\leq q$.
Each entry $P_{j,l}$ is a polynomial function in $\CC^n$. 
Let $D_j = -\im \partial/\partial_{\xi_j}, j= 1, \ldots, n$, and let $D = (D_1, \ldots, D_n)$ denote the (partial) differential operator. 
Then $P(D)$ is a matrix of linear partial differential operators with constant coefficients.
Following this notation, we consider the system of inhomogeneous PDEs in a convex and open domain $\Omega\subseteq \RR^n$, of the form
\begin{equation}\label{eq:system}
P(D) \phi = u,  
\end{equation}
or, more explicitly,
\begin{equation}\label{eq:system-explicit}
\sum_{l=1}^q P_{j,l}(-\im \partial/\partial_{\xi_1}, \ldots, -\im \partial/\partial_{\xi_n}) \phi_l = u_j, \quad j=1, \ldots, p,
\end{equation} 
where $\phi$ is the state and $u$ is the control input.
In this paper, we focus on smooth functions $u_j$ and $\phi_l$ in $\text{L}^2(\Omega)\cap \text{C}^\infty(\Omega), j=1,\ldots,p$ and $l=1,\ldots,q$.
Note that the system \eqref{eq:system} has solutions in more general spaces, such as the space of distributions \cite{treves2012ehrenpreis}.
\begin{remark}
   In practice, the domain $\Omega$ can denote a spatial domain or a product of a spatial domain and a time interval, i.e., $\Omega = \Omega_x \times \Omega_t$ where $\Omega_x \subseteq \RR^{n-1}$ and $\Omega_t \subseteq \RR_+$.
   For brevity, we use a single variable $\xi$ to denote a point in $\Omega$ and do not distinguish between spatial and temporal components.
\end{remark}
\begin{remark}
   Appropriate boundary conditions are needed to ensure the uniqueness of the solution to the system \eqref{eq:system}.
   We first focus on general solutions to \eqref{eq:system} and postpone the discussion of boundary conditions to numerical experiments.
\end{remark}
\begin{example}
   Throughout the paper, we will use the following one-dimensional heat equation as a running example to illustrate the main results,
   \begin{equation}\label{eq:heat-equation}
      \frac{\partial \phi(x,t)}{\partial t} = \frac{\partial^2 \phi(x,t)}{\partial x^2} + u(x,t).
   \end{equation}
   In this example, $P$ is a single polynomial and $P(\xi) = \im \xi_1 + \xi_2^2$, where $\xi = (\xi_1, \xi_2)$.
\end{example}

The optimal control problem under consideration is to find a control input $u$ that minimizes a cost functional $J(\phi, u)$ of the following form subject to the system \eqref{eq:system},
\begin{equation}\label{eq:objective-functional}
      J(\phi,u) = \int_{\Omega} |\phi(\xi) - \phi_d(\xi)|^2 + \lambda |u(\xi)|^2 d\xi,
\end{equation}
where $\phi_d$ is a given desired state and $\lambda > 0$ is a regularization parameter.
When $\phi_d=0$, this problem is a special case of the classical linear quadratic regulator (LQR) problem for PDEs \cite{curtain2012introduction}.
For $\Omega=\Omega_x \times \Omega_t$, where $\Omega_x = \RR^{n-1}$ and $\Omega_t = \RR_+$, explicit expressions for optimal control have been derived in \cite{bamieh2002distributed}. 
The one-dimensional case where $\Omega_x$ is a finite interval has been studied in \cite{li2024complex}.
However, explicit expressions for optimal control in a general (potentially bounded) domain $\Omega$ are not readily available.
This is relevant for many practical applications where the spatial domain is typically bounded.

In this paper, we propose to develop a neural operator for computing optimal control. 
Although we formulate the optimal control problem for linear PDEs, the neural operator can be applied to nonlinear PDEs as well.
The key to our approach is to leverage an integral representation of state and optimal control in designing neural operator architecture.
The integral representation is derived in the following section.

\section{INTEGRAL REPRESENTATION FOR OPTIMAL CONTROL}
\label{sec:integral-representation}
To derive an integral representation for optimal control, we first present the first-order optimality conditions that are satisfied by any optimal control to \eqref{eq:objective-functional} subject to \eqref{eq:system}.
The optimality conditions consist of a coupled system of PDEs, whose solution can be represented in terms of integrals over the complex domain.
Following the adjoint approach in \cite[Section 1.6.3]{hinze2008optimization}, we obtain the following result.
\begin{lemma}
   Let $u^*$ be an optimal control for \eqref{eq:objective-functional} subject to \eqref{eq:system} and $\phi^*$ be the corresponding optimal state.
   Then, there exists an adjoint state $\psi^*$ such that $u^*$ and $\phi^*$ satisfy the following system of equations,
   \begin{equation}\label{eq:optimality-conditions}
   \begin{cases}
      P(D) \phi^* = u^*, \\
      P(D)^\dagger \psi^* = -(\phi^* - \phi_d), \\
      u^* = \frac{1}{\lambda} \psi^*,
   \end{cases}
   \end{equation}
   where $P(D)^\dagger$ denotes the adjoint of $P(D)$.
\end{lemma}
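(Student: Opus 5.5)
We will follow the Lagrangian/adjoint approach of \cite[Section 1.6.3]{hinze2008optimization} and treat \eqref{eq:system} as an equality constraint in the Hilbert space $\text{L}^2(\Omega)$. We regard $J$ as a functional of the pair $(\phi,u)$. For a multiplier $\psi = (\psi_1,\ldots,\psi_p)$ we introduce the Lagrangian
\begin{equation*}
\mathcal{L}(\phi,u,\psi) = J(\phi,u) + 2\,\mathrm{Re}\,\langle \psi, P(D)\phi - u\rangle_{\text{L}^2(\Omega)}.
\end{equation*}
The factor $2\,\mathrm{Re}$ matches the normalisation of the quadratic terms in \eqref{eq:objective-functional}, so that the constants in \eqref{eq:optimality-conditions} come out exactly as stated. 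The adjoint $P(D)^\dagger$ is defined through $\langle \psi, P(D)\phi\rangle = \langle P(D)^\dagger \psi, \phi\rangle$ for admissible $\phi,\psi$. Concretely, it is the operator $\overline{P}^T(D)$ with conjugated coefficients obtained by integrating by parts. We will restrict attention to variations for which boundary terms vanish, for instance compactly supported smooth test functions in $\Omega$, or the boundary conditions postponed in the preceding remark.

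The key steps, in order, are as follows.

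\emph{Step 1.} We compute directional (Gâteaux) derivatives of $J$ at $(\phi^*,u^*)$. For a direction $h$, the derivative in $\phi$ is $2\,\mathrm{Re}\langle \phi^*-\phi_d, h\rangle$. For a direction $v$, the derivative in $u$ is $2\lambda\,\mathrm{Re}\langle u^*, v\rangle$.

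\emph{Step 2.} We eliminate the state by a reduced-functional argument. Any admissible perturbation $(h,v)$ of the constraint satisfies $P(D)h = v$, so optimality gives
\begin{equation*}
\mathrm{Re}\big(\langle \phi^*-\phi_d, h\rangle + \lambda\langle u^*, v\rangle\big) = 0 .
\end{equation*}

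\emph{Step 3.} We define $\psi^*$ as a solution of the adjoint equation $P(D)^\dagger \psi^* = -(\phi^*-\phi_d)$. Then
\begin{equation*}
\langle \phi^*-\phi_d, h\rangle = -\langle P(D)^\dagger\psi^*, h\rangle = -\langle \psi^*, v\rangle .
\end{equation*}
Substituting this into Step 2 gives $\mathrm{Re}\langle \lambda u^* - \psi^*, v\rangle = 0$ for all admissible $v$. Replacing $v$ by $\im v$ removes the real part, and density of the admissible directions then yields $u^* = \psi^*/\lambda$. The first equation of \eqref{eq:optimality-conditions} is simply the constraint itself.

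The main obstacle is Step 3, namely the existence of $\psi^*$. This requires solvability of the adjoint system $\overline{P}^T(D)\psi = f$ on the convex domain $\Omega$, and it requires that the admissible directions be rich enough. For existence we will invoke the Ehrenpreis--Malgrange theorem \cite{ehrenpreis1954solution,malgrange1956existence}: on convex open sets, constant-coefficient operators are surjective on $\text{C}^\infty$. This applies to the single-equation case and to systems satisfying the compatibility conditions of the fundamental principle. For the scalar running example \eqref{eq:heat-equation} this is immediate.

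A second technical point is that every $v$ must be reachable as some $P(D)h$, which again uses the same solvability result. Where the $\text{L}^2$ setting makes this delicate, we will state the conditions formally, in the sense of \cite{hinze2008optimization}, assuming a well-posed control-to-state map. The conclusion then follows from the standard Lagrange multiplier theorem for equality-constrained quadratic problems.
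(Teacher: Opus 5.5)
Your overall route (variations along the linearized constraint, the adjoint identity, then optimality) is the adjoint approach the paper cites, and Steps 1--2 and your signs are correct. The gap is in Step 3, in the order of the logic. You first fix $\psi^*$ as \emph{some} solution of $P(D)^\dagger\psi^*=-(\phi^*-\phi_d)$, obtained from Ehrenpreis--Malgrange, and then try to force $u^*=\psi^*/\lambda$ from $\mathrm{Re}\langle\lambda u^*-\psi^*,v\rangle=0$ together with density of the admissible $v$. Your two requirements on the directions are incompatible. To use $\langle P(D)^\dagger\psi^*,h\rangle=\langle\psi^*,P(D)h\rangle$ without boundary terms you restrict to $h\in\text{C}_c^\infty(\Omega)$. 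But then $\{P(D)h: h\in\text{C}_c^\infty(\Omega)\}$ is not dense in $\text{L}^2(\Omega)$: its orthogonal complement is $\{w\in\text{L}^2(\Omega): P(D)^\dagger w=0\}$. On a bounded domain this contains the exponentials $e^{\im k\cdot\xi}$ with $k$ in the zero set of the symbol of $P(D)^\dagger$; for example, $e^{t}\sin x$ is annihilated by the adjoint heat operator $-\partial_t-\partial_x^2$. Conversely, the $h$ that Ehrenpreis--Malgrange gives you to reach an arbitrary $v$ is not compactly supported, so the boundary terms come back. In fact the conclusion cannot hold as you set it up. Solutions of the adjoint equation are determined only modulo $\ker P(D)^\dagger$, so $u^*=\psi^*/\lambda$ holds for at most one of them, not for an arbitrary one produced by an existence theorem.

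The fix is to reverse the logic and drop the existence theorem entirely: \emph{define} $\psi^*:=\lambda u^*$. The third equation is then a definition, and the first is the constraint itself. For the second, take $h\in\text{C}_c^\infty(\Omega)$ and $v=P(D)h$ in Step 2; this keeps $(\phi^*+th,\,u^*+tv)$ feasible. In the complex case, replacing $h$ by $\im h$ removes the real part. You then get $\langle\phi^*-\phi_d,h\rangle+\lambda\langle u^*,P(D)h\rangle=0$ for all test functions $h$. This says $P(D)^\dagger(\lambda u^*)=-(\phi^*-\phi_d)$ in $\mathcal{D}'(\Omega)$, and hence classically, since $u^*$ is smooth. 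No solvability theorem, density argument, or compatibility conditions for systems are needed. Your original ordering is valid only in the well-posed boundary-value setting of \cite{hinze2008optimization}. There $\psi^*$ is the \emph{unique} solution of the adjoint equation carrying the adjoint boundary conditions that cancel the boundary terms, which is a different object from an arbitrary $\text{C}^\infty$ solution.
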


Substituting the last equation for $u^*$ into the first equation in \eqref{eq:optimality-conditions}, we observe that optimal control $u^*$ and state $\phi^*$ can be represented as a solution to the following system of PDEs,
\begin{equation}\label{eq:optimal-control-pde}
   \begin{bmatrix}
      P(D) & -I \\
      I & \lambda P(D)^\dagger
   \end{bmatrix}
   \begin{bmatrix}
      \phi^* \\
      u^*
   \end{bmatrix}
   = \begin{bmatrix}
      0 \\
      \phi_d
   \end{bmatrix}.
\end{equation}
Note that \eqref{eq:optimal-control-pde} is a system of linear PDEs with constant coefficients and can be written in a similar form to \eqref{eq:system}. 

In the following, we aim to derive an integral representation for optimal control $u^*$ and the corresponding state $\phi^*$ in \eqref{eq:optimal-control-pde}.
We first present the existence of a solution to \eqref{eq:system} in convolution form.
Then, we combine this convolution solution with the Ehrenpreis-Palamodov fundamental principle \cite{ehrenpreis1970fourier,palamodov1970linear} to derive an integral representation for optimal control.

\begin{lemma}[Theorem 2.11 in \cite{stein2011functional}]\label{lemma:fundamental-solution}
   Let $\Omega = \mathbb{R}^n$. Then, there exists a distribution $\tilde{E}$ such that $P(D) \tilde{E} = \delta$, where $\delta$ is the Dirac delta function. Moreover, the convolution of $\tilde{E}$ with the control input $u$, i.e., $\tilde{\phi} = \tilde{E} \ast u$, is a solution to \eqref{eq:system}.
\end{lemma}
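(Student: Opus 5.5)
This is the Malgrange--Ehrenpreis theorem, so I would prove it in two parts: first construct a fundamental solution $\tilde E$ with $P(D)\tilde E=\delta$, then verify that $\tilde\phi=\tilde E\ast u$ solves \eqref{eq:system}. For a genuine system ($p,q>1$) the statement should be read as existence of a matrix-valued distribution $\tilde E$ with $P(D)\tilde E=\delta I$, i.e.\ a right inverse. When $P$ is square with $\det P\not\equiv 0$, I reduce to the scalar case. Take the cofactor (adjugate) matrix $\operatorname{adj}P$, so that $P\,\operatorname{adj}P=\det P\cdot I$. If $E_0$ is a scalar fundamental solution of $\det P(D)$, then $\tilde E=\operatorname{adj}P(D)\,E_0$ works. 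So the core of the argument is the scalar statement $Q(D)E_0=\delta$ for a nonzero polynomial $Q$.

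For the scalar case I would follow the standard Fourier-analytic construction with a complex shift of the integration contour, which also foreshadows the paper's complex-frequency theme. Formally $E_0$ should be the inverse Fourier transform of $1/Q(\zeta)$, but $1/Q$ has zeros of $Q$ on $\RR^n$. To avoid them, define $E_0$ by its action on a test function $\varphi$:
\begin{equation*}
\langle E_0,\varphi\rangle=\int_{\RR^n}\frac{\hat\varphi(-\zeta-\im \eta(\zeta))}{Q(\zeta+\im\eta(\zeta))}\,d\zeta ,
\end{equation*}
where $\eta(\zeta)$ is a piecewise-constant imaginary shift. After a linear change of coordinates making $Q$ monic of degree $m$ in $\zeta_1$, one may take $\eta=\tau e_1$. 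Alternatively, one averages over $\tau$ in a small circle $|\tau|=r$ and invokes the lemma $|Q(\zeta+\tau e_1)|\ge c>0$ on that circle for suitable $r$. This lower bound follows from factoring $Q$ in $\zeta_1$ into $\prod(\zeta_1-\rho_k)$ and choosing $|\tau|$ away from the $m$ roots.

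Three checks remain. First, $\hat\varphi$ is entire and of rapid decay on horizontal strips by Paley--Wiener, so the integral converges and defines a distribution. Second, to show $Q(D)E_0=\delta$, note that applying $Q(D)$ multiplies the integrand by $Q$. This cancels the denominator and leaves $\int\hat\varphi(-\zeta-\im\eta)\,d\zeta$. By Cauchy's theorem this can be shifted back to the real contour, giving $(2\pi)^n\varphi(0)$ with the appropriate normalization.

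The convolution step is routine if $u$ is compactly supported, since then $\tilde E\ast u$ is a well-defined distribution and $P(D)(\tilde E\ast u)=(P(D)\tilde E)\ast u=\delta\ast u=u$. For general $u\in \mathrm{L}^2\cap \mathrm{C}^\infty$ I would work with a cutoff and a locally finite partition of unity. Alternatively, I would restrict to those $u$ for which the convolution is defined, since the paper only needs the representation formally.

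I expect the main obstacle to be the uniform lower bound on $|Q|$ along the shifted contour together with its measurable dependence on $\zeta$. This is exactly where the nonvanishing of $Q$ enters, and it controls the convergence of the integral. Here I would use Hörmander's inequality $|\hat\varphi(\zeta)|\sup_{|w|\le1}|Q(\zeta+w)|\lesssim\sup_{|w|\le1}|\hat\varphi(\zeta+w)Q(\zeta+w)|$ as the fallback tool.
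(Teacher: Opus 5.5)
Your construction of the scalar fundamental solution is the standard contour-shift proof of the Malgrange--Ehrenpreis theorem, which the paper only cites. Your adjugate reduction $\tilde E=\mathrm{adj}\,P(D)\,E_0$ is a correct and needed supplement, because the cited result is scalar. The restriction to square $P$ is also necessary: for $P=(\xi_1,\xi_2)^\top$ a right inverse would need $D_1\tilde E_1=\delta$ and $D_2\tilde E_1=0$, whence $D_2\delta=D_1(D_2\tilde E_1)=0$, a contradiction. The square case is all Theorem~\ref{thm:integral-representation} needs. There the block symbol $M(\zeta)$ has determinant, up to sign, $\det(\lambda I+P(\zeta)^*P(\zeta))\neq 0$ for real $\zeta$.

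One point in the scalar construction needs to be stated precisely. With straight-line shifts and Cauchy's theorem, the height must depend on $\zeta'$ only (chosen among $m+1$ values avoiding $\mathrm{Im}\,\rho_k(\zeta')$); otherwise the vertical connecting segments do not cancel. With circle averaging, a fixed radius generally fails, because the roots $\rho_k(\zeta')-\zeta_1$ sweep horizontal lines as $\zeta_1$ varies. The radius must instead be chosen measurably among $m+1$ values depending on $\zeta$, and $Q(D)E_0=\delta$ then follows from the mean-value property.

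\textbf{The step that would fail} is extending $\tilde\phi=\tilde E\ast u$ to general $u\in\mathrm{L}^2\cap\mathrm{C}^\infty$ through a locally finite partition of unity $\sum_j\chi_j=1$. Each $\tilde E\ast(\chi_j u)$ is defined, but $\tilde E$ is not compactly supported. So the pieces are not localized, and $\sum_j\tilde E\ast(\chi_j u)$ need not converge. No choice of fundamental solution helps. For $n=1$ and $P(D)=-\im\,d/d\xi$, every fundamental solution is $\im H+c$, with $H$ the Heaviside function and $c\in\CC$. For $u(\xi)=(1+\xi^2)^{-1/3}\in\mathrm{L}^2\cap\mathrm{C}^\infty$, the integral $\int_{\RR}(\im H(\xi-s)+c)\,u(s)\,ds$ is not absolutely convergent for any $c$.

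So the convolution claim must be restricted to $u$ for which it makes sense. This means compactly supported $u$ (your fallback, and what the fundamental-solution theorem gives directly), or $u\in\mc S$ together with a tempered fundamental solution (H\"ormander--\L{}ojasiewicz). Solvability for arbitrary smooth $u$ is Malgrange's theorem, proved by a Mittag-Leffler/Runge argument, and that solution is not a convolution.

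For the operator actually used in Theorem~\ref{thm:integral-representation}, you can do better and skip the contour shift entirely. $M(\zeta)^{-1}$ is uniformly bounded on $\RR^n$, since its blocks are built from $(\lambda I+P^*P)^{-1}$ (norm $\le\lambda^{-1}$) and $P(\lambda I+P^*P)^{-1}$ (norm $\le(2\sqrt\lambda)^{-1}$). Hence $E=\mc F^{-1}[M^{-1}]$, and $E\ast f$ is defined by Plancherel for every $f\in\mathrm{L}^2$.
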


\begin{remark}
   FNO is inspired by the existence of a solution $\tilde{\phi}$ in convolution form. 
The key ingredient in FNO is to learn the convolution operator in the Fourier space, which is equivalent to learning a multiplication parameter. 
\end{remark}

The existence of $\tilde{\phi}$ is for the case $\Omega=\RR^n$.
For a general open convex domain $\Omega\subseteq\RR^n$, the Ehrenpreis-Palamodov fundamental principle states that any solution to \eqref{eq:system} with $u=0$ can be represented as an integral of exponential functions multiplied by polynomials over structured subsets in $\CC^n$.

\begin{lemma}[\cite{ehrenpreis1970fourier,palamodov1970linear,harkonen2023gaussian}]\label{lemma:fundamental-principle}
   Let $\bar{\phi}$ denote a solution to the system \eqref{eq:system} with $u=0$. Then, there exists zero sets of polynomials $\bar{\mc V}_j\subset \CC^n, j=1,\ldots,s$, polynomials $\bar{Q}_{j,l}, j=1,\ldots,s, l=1,\ldots,m_j$, and measures $d\bar{\mu}_{j,l}$ whose support lies in $\bar{\mc V}_j$, such that $\bar{\phi}$ can be represented as an integral of the following form,
\begin{equation}\label{eq:ehrenpreis-representation}
   \bar{\phi} = \sum_{j=1}^{s}\sum_{l=1}^{m_j}\int_{\bar{\mc V}_j} e^{\im k \cdot \xi} \bar{Q}_{j,l}(\xi,k) d\bar{\mu}_{j,l}(k).
\end{equation}
Moreover, the function $e^{\im k \cdot \xi} \bar{Q}_{j,l}(\xi,k)$ satisfies the system of PDEs in \eqref{eq:system} for each $k\in \bar{\mc V}_j$.
\end{lemma}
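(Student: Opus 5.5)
The statement is the Ehrenpreis--Palamodov fundamental principle itself, so I do not plan to rederive it from scratch. I would reduce it to the cited theorem and make explicit the algebraic-analytic steps behind it. The setting is the ring $R=\CC[k_1,\ldots,k_n]$ acting on $\text{C}^\infty(\Omega)^q$ through $k_j\mapsto D_j$. The solution space of \eqref{eq:system} with $u=0$ is then identified with $\mathrm{Hom}_R(R^q/P^{T}R^p,\ \text{C}^\infty(\Omega))$, the solution space determined by the $R$-module $M=R^q/\mathrm{im}(P^{T})$.

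\textbf{Step 1: algebraic data.} Take a primary decomposition of the submodule $\mathrm{im}(P^T)\subseteq R^q$, with associated primes $\mathfrak p_1,\ldots,\mathfrak p_s$. Set $\bar{\mc V}_j$ to be the zero set of $\mathfrak p_j$. Palamodov's theorem on Noetherian operators then gives finitely many polynomial vectors $\bar Q_{j,l}(\xi,k)$, $l=1,\ldots,m_j$, such that a vector $F\in R^q$ lies in $\mathrm{im}(P^T)$ if and only if $\bar Q_{j,l}(\partial_k)\cdot F$ vanishes on $\bar{\mc V}_j$ for all $j,l$. Dually, each exponential-polynomial $e^{\im k\cdot\xi}\bar Q_{j,l}(\xi,k)$ with $k\in\bar{\mc V}_j$ solves $P(D)\phi=0$. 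This last claim is a direct computation. Applying $P(D)$ to $e^{\im k\cdot\xi}Q(\xi,k)$ amounts to differentiating $P(k)e^{\im k\cdot\xi}$ in $k$ in the manner prescribed by $Q$, and the Noetherian condition makes the result vanish on $\bar{\mc V}_j$. This proves the ``moreover'' clause.

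\textbf{Step 2: analytic representation.} Consider the Fourier--Laplace dual of $\text{C}^\infty(\Omega)$. Because $\Omega$ is convex, Paley--Wiener--Schwartz identifies the dual $\mathcal E'(\Omega)$ with entire functions satisfying growth bounds governed by the supporting function of compact subsets of $\Omega$. A solution $\bar\phi$ defines a continuous functional on $\widehat{\mathcal E'}$ that vanishes on $P^T\widehat{\mathcal E'}^{\,p}$. The key analytic input is that this subspace is closed, and that a vector of entire functions lies in it exactly when the Noetherian conditions of Step 1 hold. Both facts follow from Ehrenpreis' quotient/division estimates, which extend the algebraic membership criterion to the weighted spaces of entire functions. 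As a result, the functional factors through the restriction map $G\mapsto(\bar Q_{j,l}(\partial_k)G|_{\bar{\mc V}_j})_{j,l}$. We then extend it by Hahn--Banach to a weighted space of continuous functions on the varieties and apply Riesz representation to obtain measures $d\bar\mu_{j,l}$ supported on $\bar{\mc V}_j$. Finally, pairing with $\delta_\xi$, whose transform is $e^{\im k\cdot\xi}$ in the sign convention of $D$, gives \eqref{eq:ehrenpreis-representation}.

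\textbf{Main obstacle.} I expect the hard part to be the division step inside Step 2: showing that the Noetherian characterization of $P^T$-multiples survives with uniform growth bounds, so that the restriction map has closed range. This is the genuinely deep content of \cite{ehrenpreis1970fourier,palamodov1970linear}, and in the paper I would cite it rather than reprove it, following the formulation in \cite{harkonen2023gaussian}. Convexity of $\Omega$ enters only there, through Paley--Wiener and the density of exponential-polynomial solutions.
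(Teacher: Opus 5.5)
Your proposal is correct and takes the same route as the paper: the paper gives no proof of its own and simply invokes the Ehrenpreis--Palamodov fundamental principle through \cite{ehrenpreis1970fourier,palamodov1970linear,harkonen2023gaussian}. Your outline is the standard proof in those references. It obtains Noetherian operators from a primary decomposition of $\mathrm{im}(P^T)$, then uses Paley--Wiener duality, the division-with-bounds estimate, and Hahn--Banach/Riesz to produce the measures, and you correctly isolate the division estimate as the deep step to cite rather than reprove.
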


Applying Lemmas~\ref{lemma:fundamental-solution}--\ref{lemma:fundamental-principle} to the coupled state-control system \eqref{eq:optimal-control-pde}, we obtain the following result.

\begin{theorem}\label{thm:integral-representation}
Let $u^*$ denote an optimal control to \eqref{eq:objective-functional} subject to \eqref{eq:system} and $\phi^*$ denote the corresponding optimal state. Then, there exists a distribution $E$, zero sets of polynomials $\mc V_j\subset \CC^n, j=1,\ldots,s$, functions $Q_{j,l}, j=1,\ldots,s, l=1,\ldots,m_j$, and measures $d\mu_{j,l}$ whose support lies in $\mc V_j$, such that $u^*$ and $\phi^*$ can be represented in the following form,
\begin{equation}\label{eq:integral-representation}
   \begin{bmatrix}
      \phi^* \\
      u^*
   \end{bmatrix} = E \ast \begin{bmatrix}
      0 \\
      \phi_d
   \end{bmatrix} + \sum_{j=1}^{s}\sum_{l=1}^{m_j}\int_{\mc V_j} e^{\im k \cdot \xi} Q_{j,l}(\xi,k) d\mu_{j,l}(k).
\end{equation}
Moreover, the function $e^{\im k \cdot \xi} Q_{j,l}(\xi,k)$ satisfies the system of PDEs in \eqref{eq:optimal-control-pde} for each $k\in \mc V_j$.
\end{theorem}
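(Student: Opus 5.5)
The plan is to treat the coupled optimality system \eqref{eq:optimal-control-pde} as a single linear system with constant coefficients. Set
\[
\mathcal P(D) = \begin{bmatrix} P(D) & -I \\ I & \lambda P(D)^\dagger \end{bmatrix}, \qquad \Phi = \begin{bmatrix} \phi^* \\ u^* \end{bmatrix}, \qquad F = \begin{bmatrix} 0 \\ \phi_d \end{bmatrix}.
\]
Then decompose $\Phi$ into a particular solution of $\mathcal P(D)\Phi = F$ plus a homogeneous solution. Lemma~\ref{lemma:fundamental-solution} will supply the particular part and Lemma~\ref{lemma:fundamental-principle} the homogeneous part.

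The first step is to confirm that $\mathcal P(D)$ is a matrix of constant-coefficient operators of the form \eqref{eq:system}. The formal adjoint of $P(D)$ is obtained by transposing, conjugating the coefficients, and using $D_j^\dagger = D_j$. It is therefore $\overline{P}^{T}(D)$, where $\overline{P}$ denotes the polynomial with conjugated coefficients, so every entry of $\mathcal P$ is again a polynomial in $k \in \CC^n$.

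The second step is to apply Lemma~\ref{lemma:fundamental-solution} to $\mathcal P(D)$, extending $\phi_d$ by zero outside $\Omega$, to obtain a distribution $E$ with $\mathcal P(D) E = \delta I$. Then $\tilde\Phi = E \ast F$ solves $\mathcal P(D)\tilde\Phi = F$ in $\Omega$.

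The third step is to note that $\bar\Phi := \Phi - \tilde\Phi$ satisfies $\mathcal P(D)\bar\Phi = 0$ on the convex open set $\Omega$. Lemma~\ref{lemma:fundamental-principle}, applied with $P$ replaced by $\mathcal P$, then gives varieties $\mc V_j$, functions $Q_{j,l}$ (polynomial in $\xi$, vector valued), and measures $\mu_{j,l}$ supported on $\mc V_j$. These represent $\bar\Phi$ as the sum of integrals in \eqref{eq:integral-representation}, and each $e^{\im k\cdot\xi}Q_{j,l}(\xi,k)$ solves the homogeneous version of \eqref{eq:optimal-control-pde}. Adding $\tilde\Phi$ back yields the claimed representation.

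The main obstacle is that Lemma~\ref{lemma:fundamental-solution} is stated for a scalar operator, or at least implicitly requires $\mathcal P$ to be square with $\det \mathcal P(k) \not\equiv 0$, in order to invert it. The block $\mathcal P$ is square, of size $2q \times 2q$, when $p = q$. I would argue via $\det\mathcal P(k) = \det\bigl(\lambda P(k)\overline{P}^{T}(k) + I\bigr)$ (Schur complement) on real $k$. This is at least $1$ because $P(k)\overline{P}(k)^T$ is positive semidefinite for real $k$ when coefficients are handled appropriately, so the determinant is a nonzero polynomial. Malgrange--Ehrenpreis then gives a scalar fundamental solution $e$ of $\det\mathcal P(D)$, and $E = \operatorname{adj}(\mathcal P)(D)\, e$ is a matrix fundamental solution.

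There is also a regularity caveat: $E \ast F$ must be well defined on $\Omega$. I expect this to be acceptable for $\phi_d \in \text{L}^2$ with compact support, or more generally after a cutoff, since only a local solution is needed. Any residual discrepancy caused by the cutoff would itself be homogeneous and absorbed into the integral term.

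If $p \neq q$, I would instead restrict to the setting where such an $E$ exists. Another option is to invoke Lemma~\ref{lemma:fundamental-solution} as stated, treating it as applicable to systems.
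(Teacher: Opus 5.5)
Your proposal is correct and follows the paper's proof. Both take a particular solution $E\ast F$ from a fundamental solution of the block operator in \eqref{eq:optimal-control-pde} and then apply Lemma~\ref{lemma:fundamental-principle} to the homogeneous remainder. Your adjugate construction of $E$, using $\det\mathcal P(k)=\pm\det\bigl(I+\lambda P(k)\overline{P}^{T}(k)\bigr)\not\equiv 0$, in fact supplies the justification the paper skips when it applies Lemma~\ref{lemma:fundamental-solution} to a system.

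Two harmless slips remain. First, the block operator is always square of size $(p+q)\times(p+q)$, since its identity blocks are $I_p$ and $I_q$, so no separate $p\neq q$ case is needed. Second, the cutoff remark is wrong as stated: with $\chi F$ the remainder is homogeneous only where $\chi=1$. It is also unnecessary, because $|\det\mathcal P(k)|\ge 1$ on $\RR^n$ lets you take $E$ with the bounded smooth symbol $\mathcal P(k)^{-1}$, so $E\ast F$ is defined for every $F\in\text{L}^2$.
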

\begin{proof}
According to Lemma~\ref{lemma:fundamental-solution}, there exists a distribution $E$ such that
\begin{equation}\label{eq:optimal-control-pde-fundamental}
   \begin{bmatrix}
      P(D) & -I \\
       \lambda I &P(D)^\dagger 
   \end{bmatrix}
   \left(
   E \ast
   \begin{bmatrix}
      0 \\
      \phi_d
   \end{bmatrix}
   \right)
   = \begin{bmatrix}
      0 \\
      \phi_d
   \end{bmatrix}.
\end{equation}
Combining \eqref{eq:optimal-control-pde} and \eqref{eq:optimal-control-pde-fundamental}, we have
\begin{equation}\label{eq:optimal-control-pde-rewrite}
   \begin{bmatrix}
      P(D) & -I \\
       \lambda I &P(D)^\dagger 
   \end{bmatrix}
   \left(
   \begin{bmatrix}
      \phi^* \\
      u^* 
   \end{bmatrix}
   - E \ast
   \begin{bmatrix}
      0 \\
      \phi_d   
   \end{bmatrix}
   \right)
   = 0.
\end{equation}
Applying Lemma~\ref{lemma:fundamental-principle} to \eqref{eq:optimal-control-pde-rewrite}, we obtain the integral representation in \eqref{eq:integral-representation}.
\end{proof}

\begin{example}
   The proof of Lemma~\ref{lemma:fundamental-principle} is nonconstructive in obtaining the integral representation in \eqref{eq:integral-representation}.
   Following the construction in \cite{bamieh2002distributed}, we illustrate an example of this representation for the one-dimensional heat equation \eqref{eq:heat-equation} with $\Omega_x=\RR,\Omega_t = \RR_+$, given the initial condition $\phi(x,0) = \phi_0(x)$.
   Let $\phi_d=0, \lambda=1$. 
   Following \cite[Section V-B]{bamieh2002distributed}, the optimal state and control for the heat equation can be represented as follows,
\begin{equation}\label{eq:heat-integral-representation}
   \begin{split}
         \phi^*(x,t) &= \int_{\RR} e^{\im k_x x - \omega(k_x)t} \hat{\phi}_0(k_x) \frac{dk_x}{2\pi},\\
    u^*(x,t) &= \int_{\RR} e^{\im k_x x - \omega(k_x)t} (k_x^2-\omega(k_x))\hat{\phi}_0(k_x) \frac{dk_x}{2\pi},
   \end{split}
\end{equation}
where $\omega(k_x)=\sqrt{k_x^4+1}$ and $\hat{\phi}_0(k_x)$ is the Fourier transform of the initial condition $\phi_0(x)$.
Let $k= \begin{bmatrix}
k_x & k_t
\end{bmatrix}$. Then, the expression \eqref{eq:heat-integral-representation} can be represented in the form of \eqref{eq:integral-representation} by letting $s=1,m_1=1,\mc V_1 = \{k\in\CC^2: -k_t^2 + k_x^4 +1=0\}$, $Q_{1,1}(x,t,k) = \begin{bmatrix}
1 & k_x^2 + k_t
\end{bmatrix}^\top$, $d\mu_{1,1}(k) = \frac{1}{2\pi}\hat{\phi}_0(k_x)dk$ if $k_t \in \RR_+, k_x \in \RR$ and $d\mu_{1,1}(k) = 0$ otherwise, where $dk$ is the Lebesgue measure in $\RR^2$.

For the heat equation over $\RR$, it is sufficient to represent optimal control as an integral over $\RR$ as in \eqref{eq:heat-integral-representation}. 
Note that the integral representation in Theorem~\ref{thm:integral-representation} is in general over the complex domain.
For the heat equation over a finite interval, the real domain is not enough, and optimal control can be represented as an integral over well-constructed contours in $\CC$ of the form \eqref{eq:integral-representation}, see \cite{li2024complex} for construction.
\end{example}

\section{EXTENDED FOURIER NEURAL OPERATOR}
In this section, we first present the architecture of the Fourier Neural Operator (FNO). Then, we propose an extended FNO architecture, called FNO$^{\angle \theta}$, that is inspired by the complex integral representation in \eqref{eq:integral-representation}.

\subsection{Architecture of FNO}

Let $a,b$ denote an input and output function in Banach spaces $\mc A(\Omega; \RR^{n_a}), \mc B(\Omega; \RR^{n_b})$, respectively. 
In this paper, we restrict our attention to the case where $\mc A$ and $\mc B$ are subspaces of $\text{L}^2(\Omega)\cap\text{C}^\infty(\Omega)$, respectively.
Following \cite{li2020fourier,duruisseaux2025fourier}, we consider a neural operator $\mc N(a): \mc A \to \mc B$ of the form
\begin{equation}\label{eq:neural-operator}
   \mc N(a)= \mc Q \circ \mc L_L \circ \mc L_{L-1} \circ \cdots \circ \mc L_1 \circ \mc R(a),
\end{equation}
where $L$ is a given length in $\NN$, $\mc R: \mc A(\Omega; \RR^{n_a}) \to \mc B(\Omega; \RR^{n_v})$ is a lifting operator with $n_v \geq n_a$,
$\mc Q: \mc B(\Omega; \RR^{n_v}) \to \mc B(\Omega; \RR^{n_b})$ is a projection operator,
and $\mc L_l: \mc B(\Omega; \RR^{n_v}) \to \mc B(\Omega; \RR^{n_v}), l=1,\ldots,L$ are nonlinear operator layers. 
The lifting and projection operators are typically linear transformations implemented using multilayer perceptrons \cite{duruisseaux2025fourier}, and the nonlinear operator layer is of the form,
\begin{equation}\label{eq:FNO-layer}
   (\mc L_lv_l)(\xi) = \sigma\left(W_l v_l(\xi) + \left(\mc K_l v_l\right)(\xi)\right),
\end{equation}
where $v_l\in \mc B(\Omega; \RR^{n_v})$ is the output from the previous layer, $\sigma$ is a nonlinear activation function applied component-wise, $W_l \in \RR^{n_v \times n_v}$ is a weight matrix, and $\mc K_l:\mc B(\Omega; \RR^{n_v}) \to \mc B(\Omega; \RR^{n_v})$ is a bounded linear operator.

Different types of neural operators vary by the choice of the linear operator $\mc K_l$. 
To learn solution operators of PDEs, the linear operator $\mc K_l$ in FNO is defined as a convolution operator in the Fourier space motivated by the existence of a solution in convolution form by Lemma~\ref{lemma:fundamental-solution}.
Since the convolution operation can be transformed into multiplication in the Fourier space, the operator $\mc K_l$ in FNO is expressed as the inverse Fourier transform of a multiplication between two variables $\hat{K}_l(k)$ and $\hat{v}_l(k)$,
\begin{equation}\label{eq:convolution-Kl}
(\mc K_lv_l)(\xi) = \frac{1}{(2\pi)^n}\int_{\RR^n} e^{\im k \cdot \xi} \hat{K}_l(k) \hat{v}_l(k)dk,
\end{equation}
where $\hat{K}_l(k)\in \CC^{n_v \times n_v}$ is a neural operator parameter at frequency $k$, $\hat{v}_l(k)$ is the Fourier transform of $v_l$.
\begin{remark}
   In practice, the value of input function $a$ is typically only available at a finite number of discretized spatial locations in $\Omega$.
   Also, the matrix $\hat{K}_l(k)$ is usually learned for a finite number of frequencies $k$ in the Fourier space.
Therefore, FNO is implemented in a discretized manner, where \eqref{eq:convolution-Kl} is approximated as a finite sum. 
We refer readers to \cite{li2020fourier,duruisseaux2025fourier} for more details on the discretized version of FNO and other practical implementation issues.
\end{remark}

In summary, the parameters to be learned in FNO include the weight matrices $W_l$ and the Fourier space parameters $\hat{K}_l(k)$ for all layers $l=1,\ldots,L$. 
Despite the motivation from linear PDEs, FNO has been successfully applied to learning solution operators of all different types of nonlinear PDEs, with applications in various fields \cite{li2020fourier,duruisseaux2025fourier}.
This is partially due to the expressiveness of a neural operator architecture in \eqref{eq:neural-operator}, especially when the number of layers $L$ (length) and the lifted dimension $n_v$ (width) are sufficiently large.
In the following subsection, we introduce an extended FNO architecture, called FNO$^{\angle \theta}$, which will be shown to be more expressive than FNO when $n_v$ is close to the input dimension $n_a$ in numerical experiments.

\begin{figure*}[t]
   \centering
   \begin{subfigure}[t]{0.43\textwidth}
      \centering
      \includegraphics[width=\textwidth]{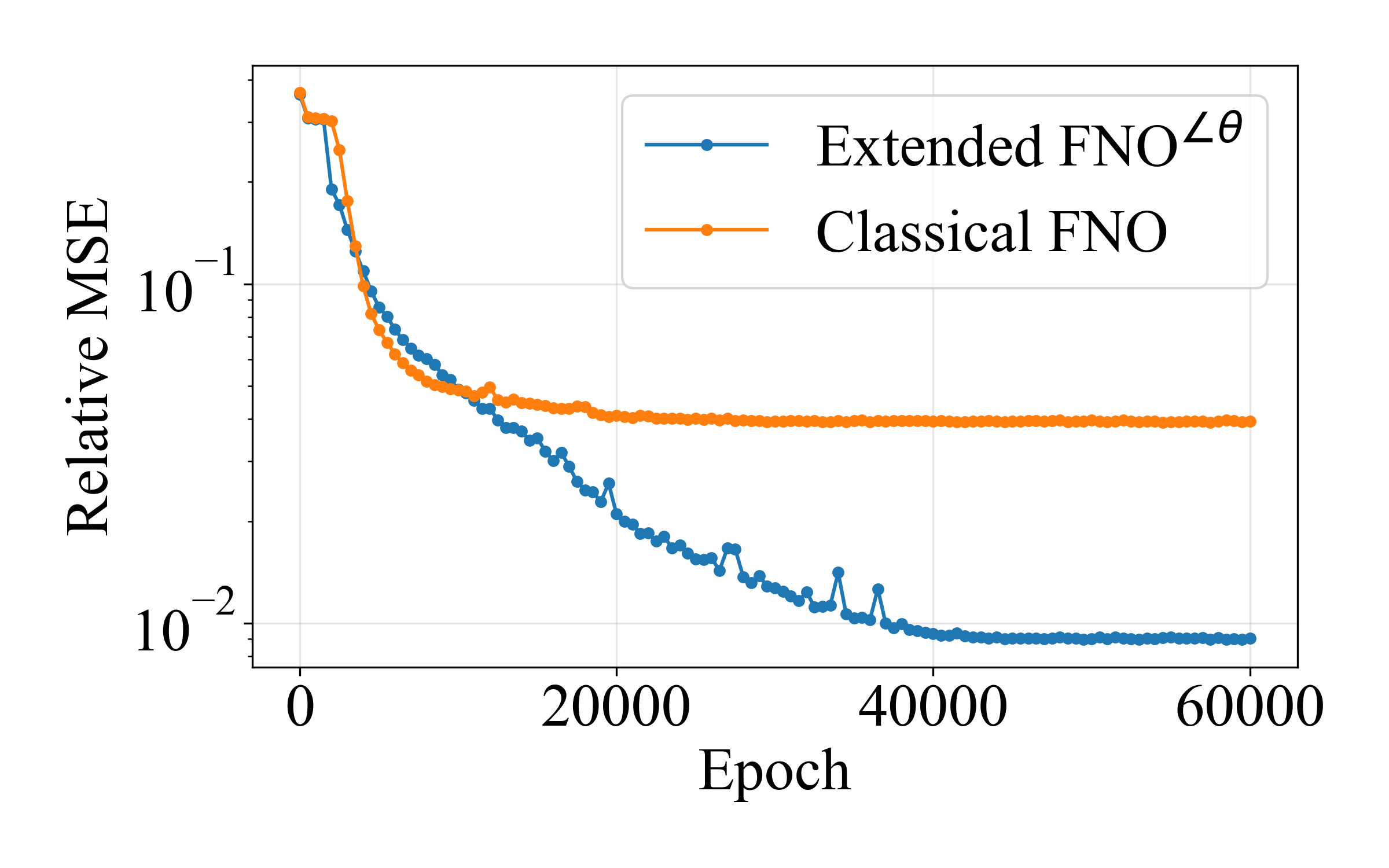}
      \caption{\sf Learning state operator}
      \label{fig:burgers_training_error_state}
   \end{subfigure}
   \hfill
   \begin{subfigure}[t]{0.43\textwidth}
      \centering
      \includegraphics[width=\textwidth]{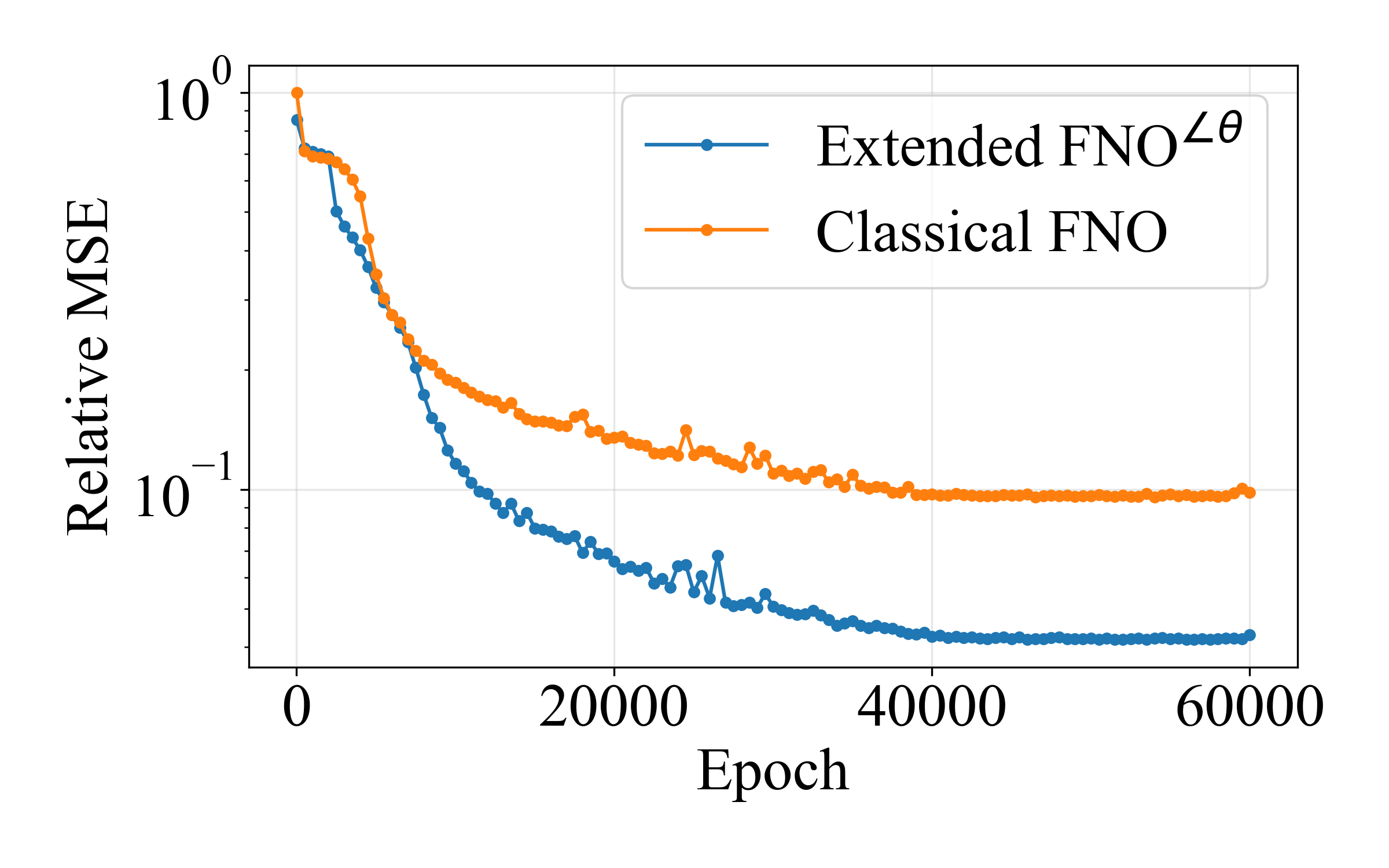}
      \caption{\sf Learning optimal control operator}
      \label{fig:burgers_training_error_control}
   \end{subfigure}
   \caption{\sf Comparison of relative mean squared errors (MSE) over all training data during the training phase of FNO and FNO$^{\angle \theta}$ for the Burgers' equation \eqref{eq:burgers-equation}.
   FNO$^{\angle \theta}$ achieves order of magnitude improvement over FNO.}
   \label{fig:burgers_training_comparison}
\end{figure*}

\subsection{The Extended FNO}

The key to the design of FNO is the convolution operator $\mc K_l$ in \eqref{eq:convolution-Kl} represented in the Fourier space, inspired by the existence of a solution in convolution form, see Lemma~\ref{lemma:fundamental-solution}.
However, not every solution to \eqref{eq:system} can be represented in convolution form, which could potentially limit the expressiveness of FNO.
In Lemma~\ref{lemma:fundamental-principle} and Theorem~\ref{thm:integral-representation}, we show that any state and optimal control of \eqref{eq:system} can be represented in a form involving a complex integral, see \eqref{eq:ehrenpreis-representation}--\eqref{eq:integral-representation}.
Comparing \eqref{eq:integral-representation} to \eqref{eq:convolution-Kl}, the integrands all contain the leading exponential term $e^{\im k \cdot \xi}$.
This motivates us to design an extended FNO architecture, where the linear operator $\mc K_l$ in \eqref{eq:FNO-layer} is given by
\begin{equation}\label{eq:convolution-Kl-extended}
(\mc K_l^{\theta}v_l)(\xi) = \frac{1}{(2\pi)^n}\int_{\RR^n} e^{\im z(k,\theta_l)\cdot \xi} \hat{K}_l(k) \hat{v}_l(k) dk,
\end{equation}
where $\theta_l = (\theta_{l,1}, \ldots, \theta_{l,n}), z_j(k,\theta_l) = k_je^{\im \theta_{l,j}},  j=1,\ldots,n$. In other words, the original frequency variable $k$ is the modulus of the new frequency variable $z$, and we introduce a new parameter $\theta_l$ to be learned as the phase of the complex variable $z$.
When the phase $\theta_l$ is zero, $\mc K_l^{\theta}$ in \eqref{eq:convolution-Kl-extended} reduces to $\mc K_l$ in \eqref{eq:convolution-Kl}.

In the following section, we present numerical experiments to show that introducing the new parameter $\theta$ greatly improves the expressiveness of the neural operator, especially when $n_v = n_a$.
Note that dimension of the new parameter $\theta$ is equal to the number of frequency variables $k$ used to evaluate the integral in \eqref{eq:convolution-Kl-extended}, and is independent of the lifted dimension $n_v$ unlike the other parameters $\hat{K}_l(k)$ and $\hat{v}_l(k)$.

\section{Numerical Experiments}

In this section, we present numerical results to compare the performance of FNO and FNO$^{\angle \theta}$ in learning state and optimal control for the Burgers' equation.
Consider the domain $\Omega = \Omega_x \times \Omega_t$ where $\Omega_x = [0,1]$ and $\Omega_t=[0,0.5]$. The Burgers' equation can be written in the form,
\begin{equation}\label{eq:burgers-equation}
   \frac{\partial \phi(x,t)}{\partial t} + \phi(x,t) \frac{\partial \phi(x,t)}{\partial x} = \nu \frac{\partial^2 \phi(x,t)}{\partial x^2} + u(x,t),
\end{equation}
where $\nu$ is the viscosity coefficient.
We choose a small value of $\nu=0.05$ that generates nonsmooth-like shock wave behavior to increase complexity for training neural operators.
We impose the following initial and boundary conditions for \eqref{eq:burgers-equation} to ensure the uniqueness of the solution,
\begin{equation}\label{eq:burgers-initial-bondition}
   \phi(x,0) = \phi_0(x), \quad x\in [0,1].
\end{equation}
\begin{equation}\label{eq:burgers-boundary-condition}
   \phi(0,t) = g(t), \phi(1,t) = h(t), \quad t\in [0,0.5],
\end{equation}
where $\phi_0, g, h$ are given smooth functions.

The Burgers' equation \eqref{eq:burgers-equation} is a nonlinear PDE that has been widely used as a benchmark problem for testing the performance of neural operators \cite{li2020fourier,takamoto2022pdebench}.
Note that \eqref{eq:burgers-equation} cannot be written in the form of \eqref{eq:system} due to the nonlinearity. 
Therefore, the integral representation in Theorem~\ref{thm:integral-representation} does not directly apply to \eqref{eq:burgers-equation}.
However, the expressiviness of the neural operator architecture in \eqref{eq:neural-operator} allows us to learn approximated representations for solution and optimal control operators of \eqref{eq:burgers-equation} using FNO and FNO$^{\angle \theta}$. 

\subsection{Learning state operator}
\label{sec:numerical-state-operator}
\begin{figure}[b]
   \centering
   \begin{subfigure}[t]{0.23\textwidth}
      \centering
      \includegraphics[width=\textwidth]{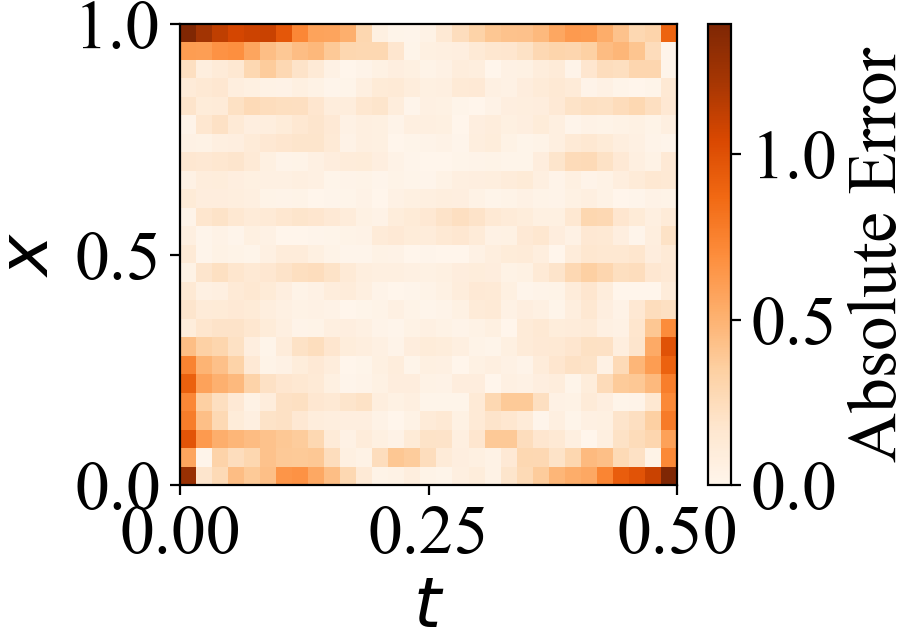}
      \caption{\sf FNO}
      \label{fig:burgers_state_comparison_FNO}
   \end{subfigure}
   \hfill
   \begin{subfigure}[t]{0.23\textwidth}
      \centering
      \includegraphics[width=\textwidth]{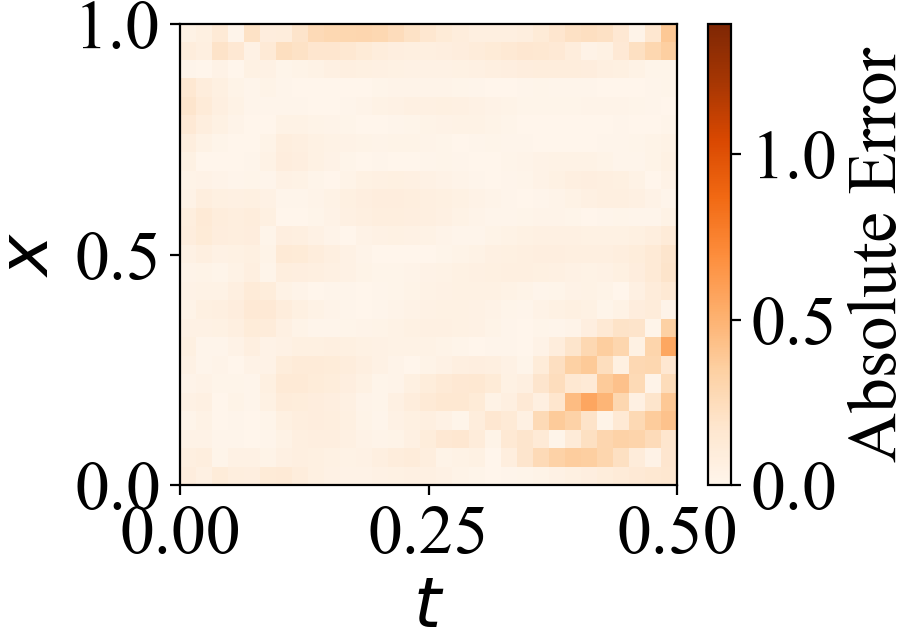}
      \caption{\sf FNO$^{\angle \theta}$}
      \label{fig:burgers_state_comparison_extended_FNO}
   \end{subfigure}
   \caption{\sf Learning state operator: comparison of the absolute error between the neural operator output and the training data for a particular boundary condition. The error for FNO is concentrated on the boundaries, whereas FNO$^{\angle \theta}$ generates more accurate boundary values.}
   \label{fig:burgers_state_comparison}
\end{figure}

First, we train neural operators of the form \eqref{eq:neural-operator} to learn the state operator of the Burgers' equation, which takes the boundary conditions $g,h$ as input $a$ and gives the state $\phi$ as output $b$. 
We follow a supervised learning approach as in \cite{li2020fourier}, where we minimize the mean squared error (MSE) between the predicted state and the true state over a training dataset.
To generate the training dataset, we discretize the domain $\Omega$ with the grid size $n_x = 24$ and $n_t=30$ for space and time domains $\Omega_x$ and $\Omega_t$, respectively.
We fix the initial condition as $\phi_0(x) = \sin(\pi x)$ and randomly sample the boundary conditions $g(t), h(t)$ from a Gaussian random field \cite{Rasmussen2004} with the length scale $\ell=0.15$ and standard deviation $\sigma=1$. 
Then, we solve the Burgers' equation \eqref{eq:burgers-equation} with the sampled boundary conditions using the finite difference method following \cite{li2020fourier} to obtain the corresponding state $\phi(x,t)$ in $\Omega$, which serves as the ground truth for training.
The training dataset consists of 50 state trajectories under different sampled boundary conditions.

For both FNO and FNO$^{\angle \theta}$, we use the Gaussian Error Linear Unit (GELU) activation function for $\sigma$ and batch normalization following \cite{duruisseaux2025fourier}. The number of Fourier layers is set to $L=4$, according to the recommendation in \cite[Section 4.3]{duruisseaux2025fourier}.
The lifted dimension is set to $n_v=n_a$. 
The lifted dimension $n_v$ determines the expressive capacity of the neural operator. Increasing $n_v$ allows FNO to represent more complex relations in the data, but comes at the cost of increased computational complexity and memory usage \cite{duruisseaux2025fourier}.
In our experiments, we choose $n_v=n_a$ to evaluate how much performance improvement FNO$^{\angle \theta}$ can achieve when FNO is not performing well.

To implement the operators \eqref{eq:convolution-Kl} and \eqref{eq:convolution-Kl-extended}, we discretize the Fourier space and learn the parameters $\hat{K}_l(k)$ and $\theta_l(k)$ for a finite number of frequencies $k_m, m= -M, -M +1, \ldots, M$.
Then, the integrals in \eqref{eq:convolution-Kl} and \eqref{eq:convolution-Kl-extended} are approximated as sums over the discretized frequencies.
To be consistent with \cite{li2020fourier}, we choose the frequency $k_m \in \mathbb{Z}$, which leads to the form of discrete Fourier series.
In practice, a large value of $M$ typically introduces more errors during the training \cite[Section 4.2]{duruisseaux2025fourier}, and we set $M=4$ in our experiments, i.e., $k_m = m, m=-4, -3,\ldots, 4$.

\begin{figure}[b]
   \centering
   \begin{subfigure}[t]{0.23\textwidth}
      \centering
      \includegraphics[width=\textwidth]{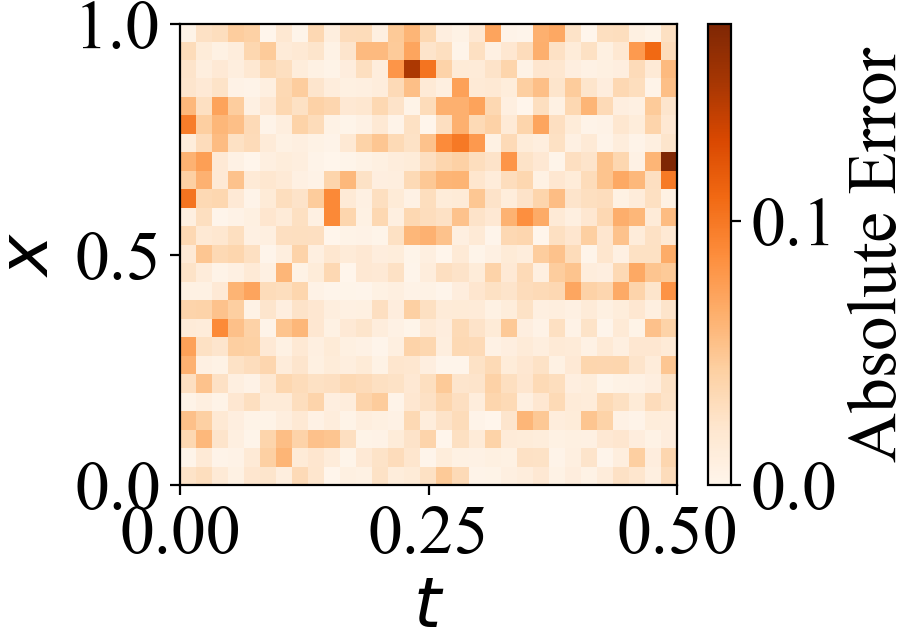}
      \caption{\sf FNO}
      \label{fig:burgers_control_comparison_FNO}
   \end{subfigure}
   \hfill
   \begin{subfigure}[t]{0.23\textwidth}
      \centering
      \includegraphics[width=\textwidth]{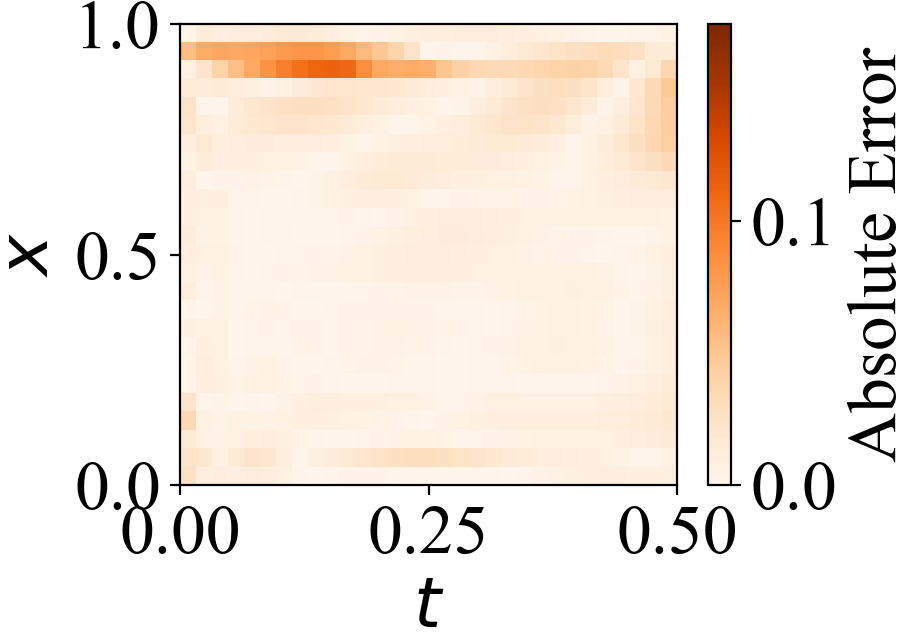}
      \caption{\sf FNO$^{\angle \theta}$}
      \label{fig:burgers_control_comparison_extended_FNO}
   \end{subfigure}
   \caption{\sf Learning optimal control operator: comparison of the absolute error between the neural operator output and the training data for a particular boundary condition. 
   The error for FNO is scattered across the domain, whereas the error for FNO$^{\angle \theta}$ is mostly smaller.}
   \label{fig:burgers_control_comparison}
\end{figure}

Figure~\ref{fig:burgers_training_error_state} compares the relative mean squared errors (MSE) during the training phase of FNO and FNO$^{\angle \theta}$ for learning state operator.
The values of relative MSE after training are 0.039 and 0.0089 for FNO and FNO$^{\angle \theta}$, respectively, showing an order of magnitude improvement with the phase parameter.
To gain further insights on the improvement, Figure~\ref{fig:burgers_state_comparison} shows the absolute error between the outputs from trained neural operators and the training data for a particular boundary condition.
It can be seen from Figure~\ref{fig:burgers_state_comparison_FNO} that most of the errors for FNO are concentrated on the four boundaries $x=0, x=1, t=0, t=0.5$.
This is because discretizing the Fourier transform \eqref{eq:convolution-Kl} with integer frequency $k_m$ results in a Fourier series, and the Fourier series generates periodic boundary values that are inconsistent with the input boundary condition.
It is possible for FNO to approximate non-periodic boundary conditions using the weight matrix $W_l$ in the operator layer \eqref{eq:FNO-layer} \cite{li2020fourier}; however, a large value of the lifted dimension $n_v$ is needed to increase the expressivity of the neural operator.
In contrast, Figure~\ref{fig:burgers_state_comparison_extended_FNO} shows that FNO$^{\angle \theta}$ generates boundary values that are more consistent with the input boundary conditions than FNO.
This is because extending the frequency $k$ in \eqref{eq:convolution-Kl} to complex variable $z$ in \eqref{eq:convolution-Kl-extended} allows representing a richer class of PDE solutions in potentially bounded domains.

\subsection{Learning optimal control operator}

In addition to learning state operator, we also compare the performance of FNO and FNO$^{\angle \theta}$ for learning optimal control operator of the burgers' equation \eqref{eq:burgers-equation}.
Following the same setting in Section~\ref{sec:numerical-state-operator}, we generate the training dataset for learning operators that map boundary conditions to optimal control.
The boundary conditions are sampled from the same Gaussian random field in Section~\ref{sec:numerical-state-operator}, and the corresponding optimal control is computed using the adjoint method in \cite{fikl2016comprehensive}.

Figure~\ref{fig:burgers_training_error_control} compares the relative MSE during the training phase of FNO and FNO$^{\angle \theta}$ for learning optimal control operator.
The values of relative MSE after training are 0.098 and 0.042 for FNO and FNO$^{\angle\theta}$, respectively.
Figure~\ref{fig:burgers_control_comparison} shows the absolute error between the trained operator output and the training data for a particular boundary condition.
As shown in Figure~\ref{fig:burgers_control_comparison_FNO}, the error is scattered across the space-time domain $\Omega$, meaning that FNO is not able to capture the complex relation between the input boundary condition and the corresponding optimal control.
For the FNO$^{\angle \theta}$, the absolute error is largely reduced for most part of the domain except for the top area close to the boundary $x=1$.
Further investigation is needed to analyze the cause of this error.



\section{CONCLUSIONS AND FUTURE WORKS}

\subsection{Conclusions}
We analyze the representations for state and optimal control of systems governed by linear partial differential equations with constant coefficients.
For this class of distributed parameter systems, we show that state and optimal control can be represented in terms of integrals over the complex domain involving the same exponential term as in the inverse Fourier transform. Therefore, such representations can be regarded as extending the Fourier frequency space from real to complex domains.
We use this insight to redesign the convolution operator in FNO, originally represented using the inverse Fourier transform. 
We introduce a new complex variable in the inverse transform, where the modulus is the original real frequency, and the phase is a new neural operator parameter to be learned. 
Numerical experiments for learning state and optimal control operators of the nonlinear burgers' equation illustrate that order of magnitude improvement for the training error can be achieved by introducing the phase parameter.
Higher consistency between output boundary values and the given non-periodic boundary conditions is also observed comparing FNO$^{\angle \theta}$ to FNO.

\subsection{Future Works}
So far, we have only evaluated the performance of FNO$^{\angle \theta}$ for a single one-dimensional PDE. We plan to test the performance on extensive benchmarks (see for example \cite{takamoto2022pdebench}) for various types of PDEs. Beyond supervised settings, it would also be interesting to investigate unsupervised training schemes since optimal control data are usually not available.



\bibliography{reference}
\bibliographystyle{IEEEtran}

\end{document}